\documentclass{article}

\PassOptionsToPackage{numbers, compress}{natbib}

\usepackage[preprint]{neurips_2026}

\usepackage[utf8]{inputenc} 
\usepackage[T1]{fontenc}    
\usepackage{hyperref}       
\usepackage{url}            
\usepackage{booktabs}       
\usepackage{amsfonts}       
\usepackage{nicefrac}       
\usepackage{microtype}      

\usepackage{graphicx}
\usepackage{subcaption}

\usepackage{amsmath}
\usepackage{amssymb}
\usepackage{mathtools}
\usepackage{amsthm}
\usepackage{wrapfig}
\usepackage{algorithm}
\usepackage{algorithmic}

\usepackage[capitalize,noabbrev]{cleveref}

\theoremstyle{plain}
\newtheorem{theorem}{Theorem}[section]
\newtheorem{proposition}[theorem]{Proposition}

\theoremstyle{definition}

\theoremstyle{remark}

\usepackage[disable,textsize=tiny]{todonotes}

\author{%
  Shiyan Liu$^{1}$\thanks{Equal contribution.} \quad Bohan Tan$^{1}$\footnotemark[1] \quad Yaoxin Wu$^{2}$ \quad Yan Jin$^{1}$ \\[0.5ex]
  $^1$ Huazhong University of Science and Technology \\
  $^2$ Eindhoven University of Technology \\[0.5ex] 
  \texttt{\{shyl, bohant, jinyan\}@hust.edu.cn} \qquad \texttt{y.wu2@tue.nl}
}

\title{MViewRouter: Internalizing Geometric Equivariance via Multi-view Alternating Attention for Combinatorial Routing}

\usepackage{multirow}
\usepackage[table]{xcolor}
\usepackage{enumitem}

\begin{document}
\maketitle

\begin{abstract}
Combinatorial routing problems such as the Traveling Salesman Problem (TSP) and the Capacitated Vehicle Routing Problem (CVRP) are fundamental NP-hard problems with broad real-world applications. While recent deep reinforcement learning methods have shown promising performance, they typically handle geometric symmetries only through data augmentation, resulting in inconsistent decisions and limited generalization. To address this issue, we propose MViewRouter, a multi-view framework that internalizes geometric equivariance as a structural inductive bias to achieve invariant decision-making across routing problem variants. Our approach introduces a Multi-view Alternating Attention (MAA) mechanism that enables parallel processing over the $D_4$ symmetry group, alternating between intra-view relational modeling and inter-view feature alignment. Furthermore, we optimize the policy via Collective Policy Gradient Aggregation (CPGA), leveraging consensus gradients from multiple symmetric views to stabilize training and accelerate convergence. Experiments on TSP and CVRP benchmarks, as well as real-world TSPLIB instances, demonstrate that MViewRouter achieves competitive solution quality and strong zero-shot generalization.
\end{abstract}

\section{Introduction}
Combinatorial routing problems are cornerstones of combinatorial optimization, representing a class of NP-hard problems with profound implications across diverse industrial domains. The Traveling Salesman Problem (TSP) and the Capacitated Vehicle Routing Problem (CVRP) are two canonical representatives: TSP models the problem of finding the shortest Hamiltonian tour over a set of nodes, while CVRP additionally requires that deliveries respect vehicle capacity constraints. From optimizing last-mile delivery routes to scheduling pick-and-place operations in robotic manufacturing, the ability to rapidly, accurately solve such problems directly impacts operational costs and resource utilization \cite{baldacci2010some}. The exponential growth of the solution space renders classical approaches, including the exact solver Concorde \cite{applegate2006concorde} and the Lin-Kernighan heuristic (LKH) \cite{helsgaun2017extension}, computationally infeasible for large-scale or real-time applications, driving the development of learned heuristics \cite{bengio2021machine}.

In recent years, deep reinforcement learning has emerged as a promising paradigm for solving routing problems by learning heuristics directly from data. Constructive methods, such as the Attention Model (AM) \cite{kool2018attention} and Pointerformer \cite{jin2023pointerformer}, are particularly attractive for real-time deployment as they incrementally build solutions with high inference efficiency. Recent advancements like Policy Optimization with Multiple Optima (POMO) \cite{kwon2020pomo} and its Preference-Optimization (PO)-based extensions \cite{pan2025preference} have further narrowed the optimality gap by leveraging multi-start decoding strategies.

Despite advancements, most neural routing solvers treat instances as sequential coordinate sets, failing to internalize the geometric symmetries of the Euclidean plane. Standard Transformer-based \cite{vaswani2017attention} architectures often yield inconsistent embeddings for the same instance under different transformations of the $D_4$ dihedral group, the 8-element group of rotations and reflections that captures the full planar symmetry of Euclidean routing instances. This lack of geometric grounding forces a reliance on test-time augmentation (TTA) \cite{kwon2020pomo}, an unprincipled post-hoc patch that attempts to recover symmetry during inference rather than embedding it as a core inductive bias. Furthermore, single-view training overlooks rich supervisory signals from symmetric perspectives, leading to brittle policies sensitive to coordinate perturbations.

To address these challenges, we propose MViewRouter, a problem-agnostic framework establishing unified training and inference symmetry through two innovations. First, the Multi-view Alternating Attention (MAA) mechanism enables explicit feature-level communication between symmetric views \cite{wang2025vggt}. By alternating between intra-view modeling and inter-view alignment, the encoder learns a consensus representation naturally invariant to transformations. Second, we introduce Collective Policy Gradient Aggregation (CPGA), a symmetry-constrained policy gradient that enforces orbit-consistent (i.e., consistent across all 8 symmetric copies of the same instance) parameter updates by aggregating consensus gradients across all $D_4$ views in a single step \cite{yu2020gradient}. An orbit-conditioned view-specific baseline filters non-equivariant noise and prevents gradient domination by geometrically easier views. Consequently, MViewRouter ensures geometric invariance is an intrinsic policy property rather than a fragmented heuristic \cite{bronstein2021geometric}.

The main contributions of this work are summarized as follows:

\begin{itemize}
\vspace{-0.3em}
    \item We develop a Multi-view Alternating Attention mechanism that enables parallel processing of the $D_4$ symmetry group. By alternating between intra-view topology and inter-view alignment, the encoder learns equivariant representations grounded in geometric symmetry.
    \item We introduce Collective Policy Gradient Aggregation, a training paradigm treating symmetric views as concurrent objectives. By aggregating consensus gradients, CPGA accelerates convergence through denser supervision and enforces consistent, optimal decision-making orbits in the entire group.
    \item We validate MViewRouter on both TSP and CVRP benchmarks, as well as real-world TSPLIB instances, demonstrating its competitive performance, strong zero-shot generalization, and broad applicability beyond a single problem type.
\end{itemize}

\section{Related Work}

\subsection{Neural Constructive Solvers}
Neural constructive solvers have gained significant traction for solving the TSP. Early pioneers like Pointer Network \cite{vinyals2015pointer}, Neural CO \cite{bello2016neural}, and AM \cite{kool2018attention} utilized sequence-to-sequence and Transformer-based \cite{vaswani2017attention} architectures to construct solutions autoregressively, with subsequent work extending RL-based construction to heterogeneous vehicle routing problem variants \cite{nazari2018reinforcement}. To further enhance performance and reduce gradient variance, POMO \cite{kwon2020pomo} introduced a multiple-greedy rollout strategy, which has since become a cornerstone baseline in the field. 

Building on this, several works have focused on improving representation or search strategies. For instance, Pointerformer \cite{jin2023pointerformer} refines the attention mechanism for more expressive node features, Multi-Decoder AM \cite{xin2021multi} and Poppy \cite{grinsztajn2024winner} foster solution diversity through multi-policy ensembles, and search-based refinement \cite{choo2022simulation,hottung2021efficient} improves quality through guided exploration. Improvement-based approaches \cite{wu2021learning,lu2019learning,hottung2020neural,ma2024learning} complement these by iteratively refining existing solutions. To address the challenges of large-scale instances, DIFUSCO \cite{sun2023difusco} and H-TSP \cite{pan2023h} advance this frontier by leveraging diffusion models and hierarchical reinforcement learning, respectively, while NeuroLKH \cite{xin2021neurolkh,zheng2021combining} integrates neural guidance into classical LKH search. Transitioning from architecture to learning paradigms, POMO has recently been treated as a versatile framework. Prominent examples include preference optimization \cite{pan2025preference}, which transforms quantitative rewards into qualitative preference signals through statistical comparison and integrates local search during fine-tuning to help the policy escape local optima, and BOPO \cite{liao2025bopo}, a complementary preference-based variant.

While constructive methods focus on policy refinement or single-view architectures, our work explores a holistic multi-view framework. Unlike conventional approaches, our method enforces inter-view interactions during the encoding stage to achieve unified training and inference. By grounding diverse spatial perspectives into a consistent representation, MViewRouter ensures that geometric invariance is maintained as a fundamental structural property throughout the entire model lifecycle.

\subsection{Geometric Invariance}
Euclidean routing problems possess inherent geometric symmetries, which are vital for robust neural combinatorial optimization. Existing approaches address these symmetries at different levels of the model. Sym-NCO \cite{kim2022sym} enforces symmetry as an \emph{output-level} constraint via symmetry-aware regularization that penalizes solution discrepancies across transformed views. While effective, this leaves the feature space unconstrained: the encoder can still produce divergent embeddings for geometrically equivalent instances. At the architectural level, INVIT \cite{fang2024invit} mitigates embedding aliasing by designing an invariant encoder, but its invariance operates within a single view and does not exploit complementary information across multiple symmetric views. BQ-NCO \cite{drakulic2023bq} takes a complementary MDP-level approach, reducing the state space via bisimulation quotienting to leverage problem symmetries, though without explicit feature-level alignment across views.

In contrast, MViewRouter internalizes geometric symmetry as a structural inductive bias \cite{bronstein2021geometric,satorras2021n} rather than a post-hoc constraint. Unlike Sym-NCO's output-level regularization and INVIT's single-view invariance, MAA enforces \emph{feature-level} consistency via inter-view fiber attention before decoding. Unlike BQ-NCO's MDP-level state reduction, MViewRouter explicitly aligns latent representations across all $K$ symmetric views, achieving $D_4$-invariance as an intrinsic property of the encoder.

\section{Problem Formulation}
\label{sec:problem}

We consider two canonical combinatorial routing problems that serve as testbeds for our framework.

\paragraph{Traveling Salesman Problem (TSP).}
The TSP is defined on a complete undirected graph $G = (V, E)$, where $V = \{v_1, \dots, v_n\}$ denotes the set of $n$ nodes. Each node $v_i$ is represented by its 2D Euclidean coordinates, normalized to $[0, 1]^2$. The edge cost $c(v_i, v_j) = \lVert v_i - v_j \rVert_2$. A feasible solution is a Hamiltonian cycle $\tau$ that visits each node exactly once.
Given a tour $(v_{\sigma_1} v_{\sigma_2} ... v_{\sigma_n}), v_{\sigma_i} \in V$, the objective is to minimize the total tour length:
\begin{equation}
\label{Eq1}
    L_{\text{TSP}}(\tau) = \sum_{i=1}^{n-1} c(v_{\sigma_i}, v_{\sigma_{i+1}}) + c(v_{\sigma_n}, v_{\sigma_1}).
\end{equation}
\vspace{-1.5em}

\paragraph{Capacitated Vehicle Routing Problem (CVRP).}
CVRP extends TSP by introducing a depot $v_0$ and capacity constraints. The node set is $V = \{v_0, v_1, \dots, v_n\}$, where $v_0$ is the depot and $v_1, \dots, v_n$ are customers, each with a demand $d_i \in \mathbb{Z}_{>0}$. A fleet of homogeneous vehicles, each with capacity $Q$, must serve all customers. A feasible solution is a set of routes $\mathcal{R} = \{r_1, \dots, r_m\}$, where each route starts and ends at $v_0$, covers a disjoint subset of customers, and satisfies $\sum_{v_i \in r_j} d_i \leq Q$. Let $E(r)$ denote the ordered edge set of route $r$ (including the depot-to-first and last-to-depot legs). The objective is to minimize the total routing cost:
\begin{equation}
\label{Eq2}
    L_{\text{CVRP}}(\mathcal{R}) = \sum_{r \in \mathcal{R}} \sum_{(u,\,v) \in E(r)} c(u, v).
\end{equation}
\vspace{-1.5em}

Both problems share the same $D_4$ geometric symmetry structure: any rotation or reflection applied to all node coordinates yields an equivalent instance with identical optimal cost. This shared symmetry makes them natural testbeds for our geometric equivariance framework.

\begin{figure*}[!t]
    \centering
    \includegraphics[width=0.95\linewidth]{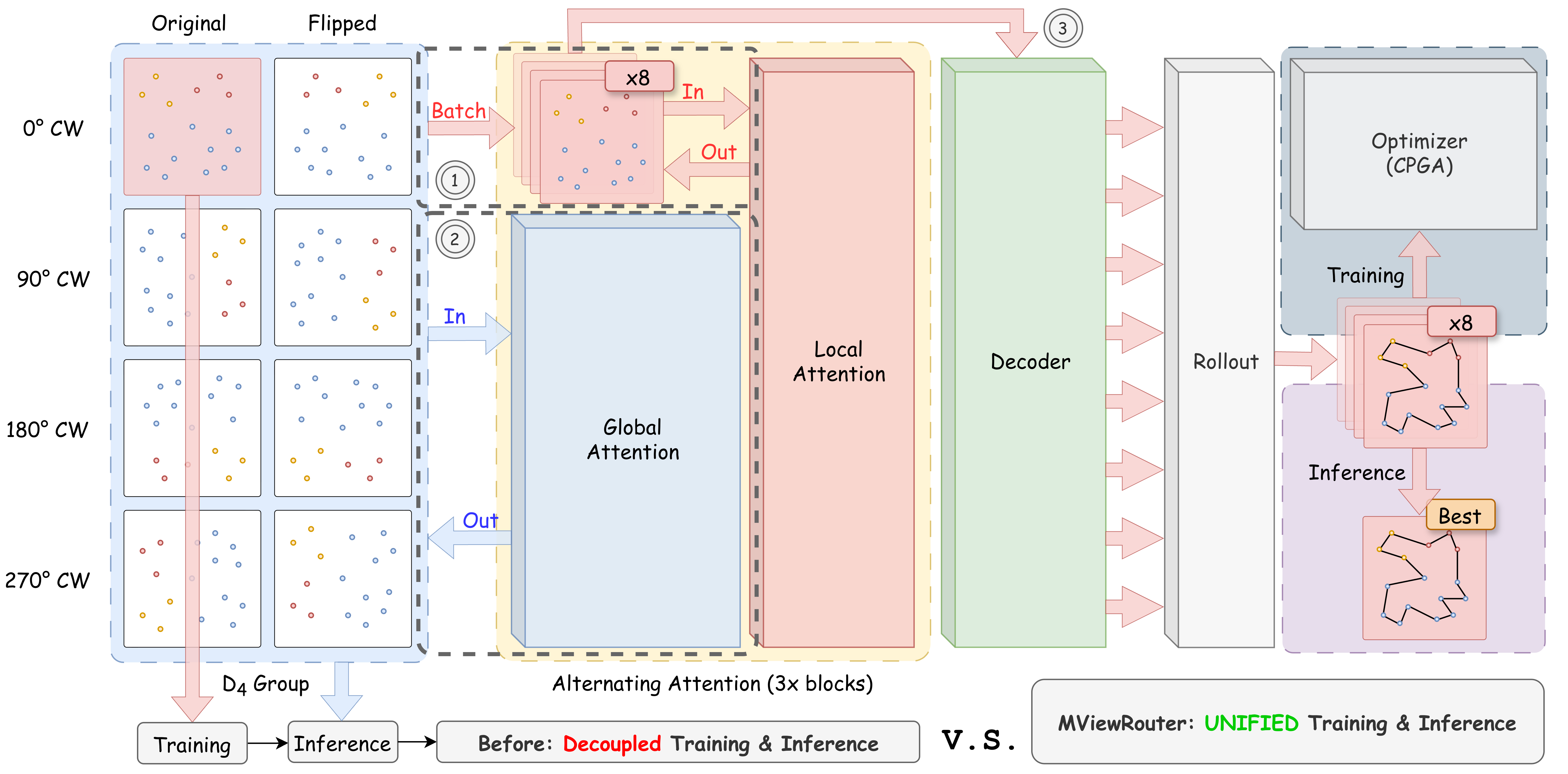}
    \caption{Overview of the MViewRouter architecture.}
    \label{fig:overall_architecture}
    \vspace{-1.5em}
\end{figure*}

\section{The Proposed MViewRouter Method}
\label{sec:method}
We propose a multi-view geometric equivariance framework, MViewRouter, that internalizes spatial symmetry as a structural inductive bias. MViewRouter enables cross-view information flow via Multi-view Alternating Attention (MAA) and optimizes the policy through Collective Policy Gradient Aggregation (CPGA), as illustrated in Figure~\ref{fig:overall_architecture}. Operating at the encoder and training-objective levels, MViewRouter is problem-agnostic: the same MAA and CPGA components apply to any routing problem with Euclidean node coordinates, including both TSP and CVRP. This unified multi-view, problem-agnostic design ensures that the geometric invariance used for TTA is explicitly grounded during both encoding and training.

\subsection{Multi-View Alternating Attention}
\label{sec:maa}
Given a routing instance $\mathbf{X} \in \mathbb{R}^{n \times 2}$, we generate $K=8$ augmented views $\{\mathbf{X}^{(k)} = g_k(\mathbf{X})\}_{k=1}^{K}$ by applying each element of the $D_4$ dihedral group pointwise (see Appendix~\ref{app:d4} for the explicit list of transformations), then embed them via a shared linear projection $f_\phi$ into a unified tensor $\mathbf{H}_0 \in \mathbb{R}^{K \times n \times d}$. To enforce latent consistency across these views, we introduce a MAA encoder \cite{wang2025vggt} consisting of $L$ blocks, each alternating between \textit{local} (intra-view) and \textit{global} (inter-view) reasoning.

\paragraph{Local Attention (Intra-view Topology).} 
The local attention layer captures node-to-node spatial dependencies within each view independently. For a given block $l$, the input is the multi-view embedding tensor $\mathbf{H}_{l-1} \in \mathbb{R}^{K \times n \times d}$. The intra-view relational structure for each view $k \in \{1,\dots,K\}$ is modeled as:
\begin{equation}
    \hat{\mathbf{H}}^{(k)}_l = \text{MHA}_{\text{local}}(\mathbf{H}^{(k)}_{l-1}),
\end{equation}
where $\text{MHA}(\cdot)$ denotes the Multi-Head Attention \cite{vaswani2017attention} mechanism with $M$ parallel heads. Given the attention weight matrices $\mathbf{W}^Q$, $\mathbf{W}^K$, and $\mathbf{W}^V$, each head $m$ takes $\mathbf{H}$ as input to compute:
\begin{equation}
    \text{Head}_m(\mathbf{H}) = \mathrm{softmax} \left( \frac{(\mathbf{H}\mathbf{W}^Q_m) (\mathbf{H}\mathbf{W}^K_m)^\top}{\sqrt{d_h}} \right) (\mathbf{H}\mathbf{W}^V_m),
\end{equation}
where $d_h = d/M$ is the dimension of each attention head. The local attention operation preserves the relative spatial topology of nodes under the specific geometric transformation $g_k$, ensuring that intra-view features remain locally consistent.

\paragraph{Global Attention (Inter-view Alignment).} 
To enforce equivariance across the symmetry group $\mathcal{G}$, we introduce interaction across the view dimension. Since the geometric transformations $g_k \in D_4$ are applied pointwise to coordinates, the node indexing remains invariant across views. Leveraging this correspondence, for each node $i \in \{1,\dots,n\}$, we define a cross-view fiber $\mathbf{F}_{i,l} \in \mathbb{R}^{K \times d}$ by collecting the embeddings of the same physical node $i$ from all $K$ views:
\begin{equation}
    \mathbf{F}_{i,l} = \left[ \hat{\mathbf{H}}^{(1)}_{l,i}, \hat{\mathbf{H}}^{(2)}_{l,i}, \dots, \hat{\mathbf{H}}^{(K)}_{l,i} \right]^\top.
\end{equation}
The inter-view alignment is then achieved by applying a shared $\mathrm{MHA}$ across the fiber dimension to update the node embeddings:
\begin{equation}
    \mathbf{F}'_{i,l} = \text{MHA}_{\text{global}}(\mathbf{F}_{i,l}), \quad \text{where } \mathbf{F}'_{i,l} = \left[ \mathbf{H}^{(1)}_{l,i}, \dots, \mathbf{H}^{(K)}_{l,i} \right]^\top.
\end{equation}
By applying attention across the view dimension, the encoder facilitates communication between different orientations of the same node. This regularizes the latent representations to be consistent under spatial transformations, bridging the gap between training and inference by ensuring that geometric equivariant features are explicitly aligned \cite{arnab2021vivit}.

\begin{proposition}[MAA Invariance at the Uniform Fixed Point]
\label{prop:maa_invariance}
Let $f_\phi$ be the shared linear projection and let $\mathrm{MHA}_{\mathrm{local}}$ have fixed weights. If the inter-view global attention weights are uniform, i.e., $\alpha_{j \to i}^{(k)} = 1/K$ for all $i, j, k$, then for any group element $g \in D_4$ inducing a permutation $\sigma_g$ on the $K$ views, the output fiber embedding satisfies
\begin{equation}
    \mathbf{H}^{(k)}_{l,i} = \mathbf{H}^{(\sigma_g(k))}_{l,i} \quad \forall\, i,\, k,
\end{equation}
yielding a $D_4$-invariant node representation.
\end{proposition}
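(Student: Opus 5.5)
The plan is to write the global attention update for a single fiber explicitly and show that, under uniform weights, the output at view $k$ is the same vector for every $k$. Equality under the view permutation $\sigma_g$, and hence $D_4$-invariance, then follows at once. For a fixed node $i$ and block $l$, each head $m$ of $\mathrm{MHA}_{\mathrm{global}}$ produces at view position $k$ the quantity $\sum_{j=1}^{K}\alpha^{(k)}_{j\to i}\,\hat{\mathbf{H}}^{(j)}_{l,i}\mathbf{W}^V_m$. I interpret $\alpha^{(k)}_{j\to i}$ as the softmax weight that view $k$ of node $i$ places on view $j$ of that node. Substituting $\alpha^{(k)}_{j\to i}=1/K$ turns this into $\tfrac1K\sum_{j}\hat{\mathbf{H}}^{(j)}_{l,i}\mathbf{W}^V_m$, the fiber mean pushed through the value map. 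This expression has no dependence on $k$.

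Next, I concatenate the heads and apply the shared output projection $\mathbf{W}^O$. Both operations act identically at every view position. Any position-wise residual connection, normalization or feed-forward sublayer inside the block would break the conclusion, because a residual term $\hat{\mathbf{H}}^{(k)}_{l,i}$ depends on $k$. I therefore state the assumption explicitly: the output fiber $\mathbf{F}'_{i,l}$ is exactly the MHA output, as in the displayed definition. Under that assumption $\mathbf{H}^{(k)}_{l,i}=\bar{\mathbf{h}}_{l,i}$ is a single vector shared by all $k$. Consequently $\mathbf{H}^{(k)}_{l,i}=\mathbf{H}^{(k')}_{l,i}$ for every pair $k,k'$, and in particular for $k'=\sigma_g(k)$.

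To justify the phrase ``$D_4$-invariant node representation'', I then pass to the group action on inputs. Applying $g$ to $\mathbf{X}$ replaces the view set $\{g_k\mathbf{X}\}$ by $\{g_kg\mathbf{X}\}=\{g_{\sigma_g(k)}\mathbf{X}\}$, because right multiplication by $g$ permutes the elements of $D_4$. The local stage uses a shared $f_\phi$ and a shared $\mathrm{MHA}_{\mathrm{local}}$ with fixed weights, and it acts on each view independently. The node indexing is also common to all views. The fiber for the transformed input is therefore the original fiber with its rows permuted by $\sigma_g$. The uniform average $\tfrac1K\sum_j$ is symmetric in its summands, so the output $\bar{\mathbf{h}}_{l,i}$ is unchanged. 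The node representation is thus invariant under the group action.

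I expect the main obstacle to be conceptual rather than computational. The work lies in pinning down precisely which block sublayers the statement covers and in correctly identifying the permutation $\sigma_g$ as the regular action of $D_4$ on itself. Once the residual and feed-forward terms are either excluded or shown to act on the already-constant fiber, both steps are routine. If residuals are present, I would instead weaken the claim to invariance of the fiber as a multiset, i.e.\ equivariance under $\sigma_g$.
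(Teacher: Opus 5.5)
Your proposal is correct and follows essentially the same argument as the paper. The shared projection and per-view local attention make applying $g$ permute the fiber by $\sigma_g$, and under uniform weights the global attention collapses to the fiber mean $\tfrac1K\sum_k \mathbf{W}^V\hat{\mathbf{H}}^{(k)}_{l,i}$, which is both constant across views and invariant to permutation. Your explicit treatment of the multi-head output projection, and your caveat that a residual term $\hat{\mathbf{H}}^{(k)}_{l,i}$ would break the view-wise equality, are sound refinements: the paper's proof leaves that assumption implicit by identifying $\mathbf{F}'_{i,l}$ with the raw MHA output.
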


\begin{proof}
Applying $g$ to the input instance permutes the views as $\mathbf{X}^{(k)} \mapsto \mathbf{X}^{(\sigma_g(k))}$. Since $f_\phi$ is a shared linear projection, the initial embeddings are correspondingly permuted: $\mathbf{H}^{(k)}_0 \mapsto \mathbf{H}^{(\sigma_g(k))}_0$. Because $\mathrm{MHA}_{\mathrm{local}}$ operates independently per view with fixed weights, this permutation propagates through the local attention, yielding $\hat{\mathbf{H}}^{(k)}_{l,i} \mapsto \hat{\mathbf{H}}^{(\sigma_g(k))}_{l,i}$ for all layers $l$. Consequently, applying $g$ permutes the fiber $\mathbf{F}_{i,l}$ by $\sigma_g$. Under uniform attention weights $\alpha_{j\to i}^{(k)} = 1/K$, the global MHA reduces to
\begin{equation}
    \mathbf{F}'_{i,l} = \frac{1}{K} \sum_{k=1}^{K} \mathbf{W}^V \hat{\mathbf{H}}^{(k)}_{l,i}.
\end{equation}
This sum is invariant to any permutation of its terms, so $\mathbf{F}'_{i,l}$ is identical regardless of which group element $g$ was applied. Consequently, all views share the same fiber output, establishing $D_4$-invariance.
\end{proof}

\paragraph{Practical implications.}
In practice, the global attention weights are learned rather than fixed at $1/K$. CPGA provides the training signal that drives the encoder toward this invariant fixed point: by enforcing orbit-consistent gradient updates, it encourages the global attention weights to converge to a near-uniform distribution, as corroborated empirically by the 53.6\% reduction in standard deviation under $D_4$ perturbations (Table~\ref{tab:stability}) and the representation-level alignment metrics reported in Section~\ref{sec:repr-align}.

\subsection{Collective Policy Gradient Aggregation}

To explicitly enforce geometric grounding during optimization, we introduce Collective Policy Gradient Aggregation (CPGA). Rather than treating the $K$ symmetric views as independent augmented samples, CPGA is a \emph{symmetry-constrained policy gradient} that processes the full $D_4$ orbit synchronously and enforces orbit-consistent parameter updates, drawing on consensus gradient methods from multi-task learning \cite{yu2020gradient,sener2018multi}. The eight views are isometric transformations with equivalent optimal solutions; they induce different gradient directions due to representational asymmetry, and CPGA regularizes these toward a consensus, driving the encoder toward $D_4$-invariant representations. Each geometric transformation $g_k \in \mathcal{G}$ defines a sub-objective over its induced trajectory space:
\begin{equation}
    \mathcal{J}^{(k)}(\theta) = \mathbb{E}_{\pi \sim p_\theta(\cdot|g_k(\mathbf{X}))} [R(\pi)],
\end{equation}
where $R(\pi) = -L(\pi)$ represents the reward function, defined as the negative total route cost of solution $\pi$ (i.e., negative tour length for TSP and negative total route length for CVRP). Distinct from standard stochastic data augmentation, where transformed samples are treated as independent instances (often introducing variance), CPGA processes the full group orbit synchronously to enforce geometric consistency.

We define the global objective $\mathcal{J}_{\mathcal{G}}(\theta)$ as the joint expectation over the uniform distribution of the symmetry group (specifically the $D_4$ dihedral group):
\begin{equation}
    \mathcal{J}_{\mathcal{G}}(\theta) = \mathbb{E}_{\mathbf{X} \sim \mathcal{D}} \left[ \frac{1}{|\mathcal{G}|} \sum_{g \in \mathcal{G}} \int \pi_\theta(\tau | g(\mathbf{X})) R(\tau) d\tau \right].
\end{equation}
By processing $K=8$ symmetric views within a unified batch, the model constructs $N$ parallel trajectories $\{\pi^{(k,i)}\}_{i=1}^N$ for each $g_k \in \mathcal{G}$. This architecture provides a dense supervisory signal, where the parameter update is driven by the Collective Policy Gradient \cite{williams1992simple,yu2020gradient}:
\begin{equation} \label{eq:collective_gradient}
    \nabla_\theta \mathcal{J}_{\mathcal{G}}(\theta) \approx \frac{1}{K \cdot N} \sum_{k=1}^K \sum_{i=1}^N \Big( R(\pi^{(k,i)}) - b^{(k)} \Big) \cdot \nabla_\theta \log p_\theta(\pi^{(k,i)} | g_k(\mathbf{X})),
\end{equation}
where $b^{(k)} = \frac{1}{N} \sum_{i=1}^N R(\pi^{(k,i)})$ serves as a view-specific baseline. Crucially, this view-specific baseline differentiates CPGA from naive augmentation with a global baseline. By normalizing rewards within each orientation, $b^{(k)}$ prevents the gradient from being dominated by ``easier'' geometric views (which naturally yield higher rewards), ensuring that the optimization focuses on relative improvement across the entire symmetry group. The complete training protocol is formalized in Algorithm~\ref{alg:omni_training} (Appendix~\ref{app:algorithm}).

A formal complexity analysis is provided in Appendix~\ref{app:complexity}. In brief, the MAA encoder matches the $\mathcal{O}(n^2 d)$ asymptotic complexity of the standard single-view Transformer in POMO, and the empirical training overhead is 6\% (5.3 vs.\ 5.0 min per epoch).

\section{Experimental Results}
\label{sec-exp}

\subsection{Experimental Setting}

\paragraph{Datasets}

MViewRouter is evaluated across synthetic and real-world TSP and CVRP instances.

\textit{TSP Instances} \quad Node coordinates are uniformly sampled from $[0, 1]^2$. We train on TSP-50 and TSP-100 and evaluate zero-shot generalization on TSP-200 and TSP-500. Each test set contains 10,000 instances. For out-of-distribution generalization, we additionally evaluate on 29 TSPLIB instances \cite{reinelt1991tsplib} (50--200 nodes) using the TSP-100 model without retraining or fine-tuning.

\textit{CVRP Instances} \quad Following standard practice \cite{kwon2020pomo,queiroga202110}, node coordinates are uniformly sampled from $[0, 1]^2$, customer demands are drawn uniformly from $\{1, \dots, 9\}$, and vehicle capacity is set to $Q=40$ for CVRP-50 and $Q=50$ for CVRP-100. We train separate models on CVRP-50 and CVRP-100 and generate 10,000 instances for each test set.

\paragraph{Baselines}
For TSP, we compare against exact solvers (Concorde \cite{applegate2006concorde}, Gurobi \cite{pedroso2011optimization}), traditional heuristics (OR-Tools, LKH3 \cite{helsgaun2017extension}), and recent state-of-the-art (SOTA) neural solvers: POMO \cite{kwon2020pomo}, Sym-NCO \cite{kim2022sym}, Pointerformer \cite{jin2023pointerformer}, POMO+PO \cite{pan2025preference}, and DIFUSCO \cite{sun2023difusco}. For CVRP, we compare against LKH3 as the heuristic reference and Sym-NCO, POMO, and POMO+PO as neural baselines.

\paragraph{Implementation Details}
Full hyperparameters are provided in Appendix~\ref{app:impl}. To ensure a fair comparison with POMO, the per-view batch size is 8 instances, giving a total of $8\text{ views}\times 8\text{ instances}=64$, identical to POMO's batch size. The $\times$8 views therefore constitute a \emph{reallocation} of the same training budget, not an increase in data volume.


\begin{figure*}[!t]
    \centering
    \includegraphics[width=0.9\textwidth]{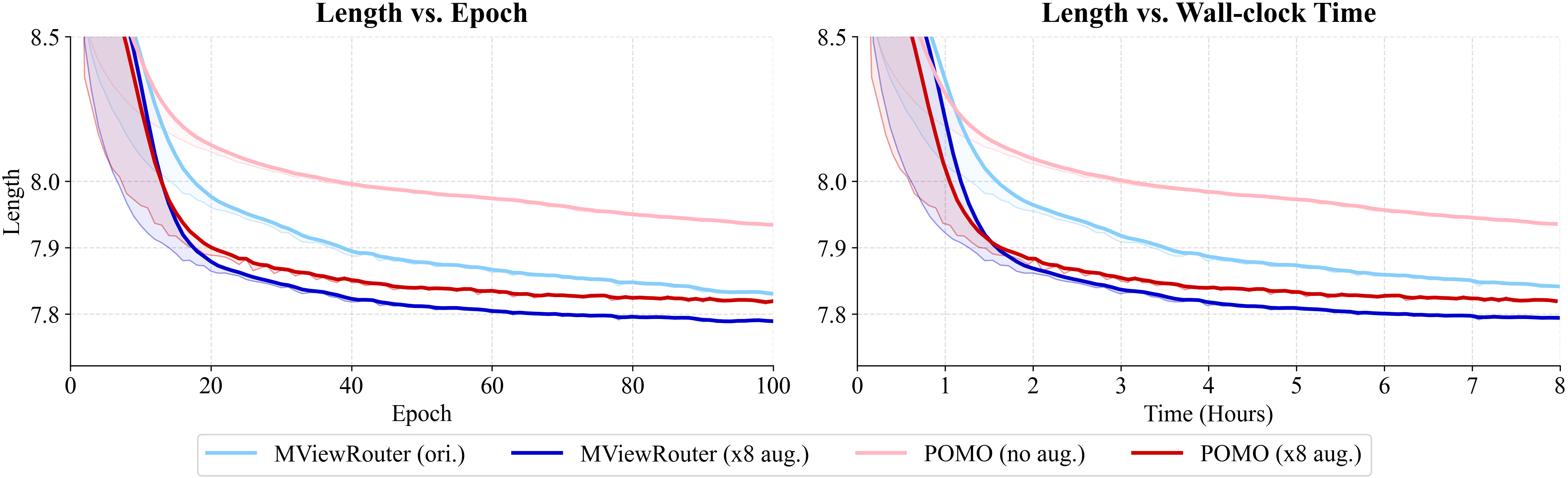}
    \caption{Training curves on TSP-100: POMO and MViewRouter under original view and $\times$8 aug.}
    \label{fig:training_curves}
    \vspace{-1em}
\end{figure*}

\subsection{Training Dynamics}
\label{sec:training_dynamics}

We analyze training dynamics under two evaluation protocols: (1) \textbf{Original View} (decoding from the original, untransformed problem instance), and (2) \textbf{$\times$8 Augmentations}, selecting the best tour across all 8 $D_4$ views.

As shown in Figure~\ref{fig:training_curves}, MViewRouter's original-view performance already matches augmented POMO, while its $\times$8 augmentation curve maintains a consistent lead throughout training. This acceleration is attributed to CPGA: aggregating consensus gradients over the full $D_4$ orbit provides a denser supervisory signal that forces the policy to internalize geometric invariants early, yielding a more robust optimization landscape. An analogous pattern holds on CVRP-100 (Appendix~\ref{app:training_curves}).

\subsection{Comparative Results on Generated Instances}

\paragraph{TSP.}

\begin{wrapfigure}{r}{0.48\textwidth}
\vspace{-1\baselineskip}
    \centering
    \includegraphics[width=\linewidth]{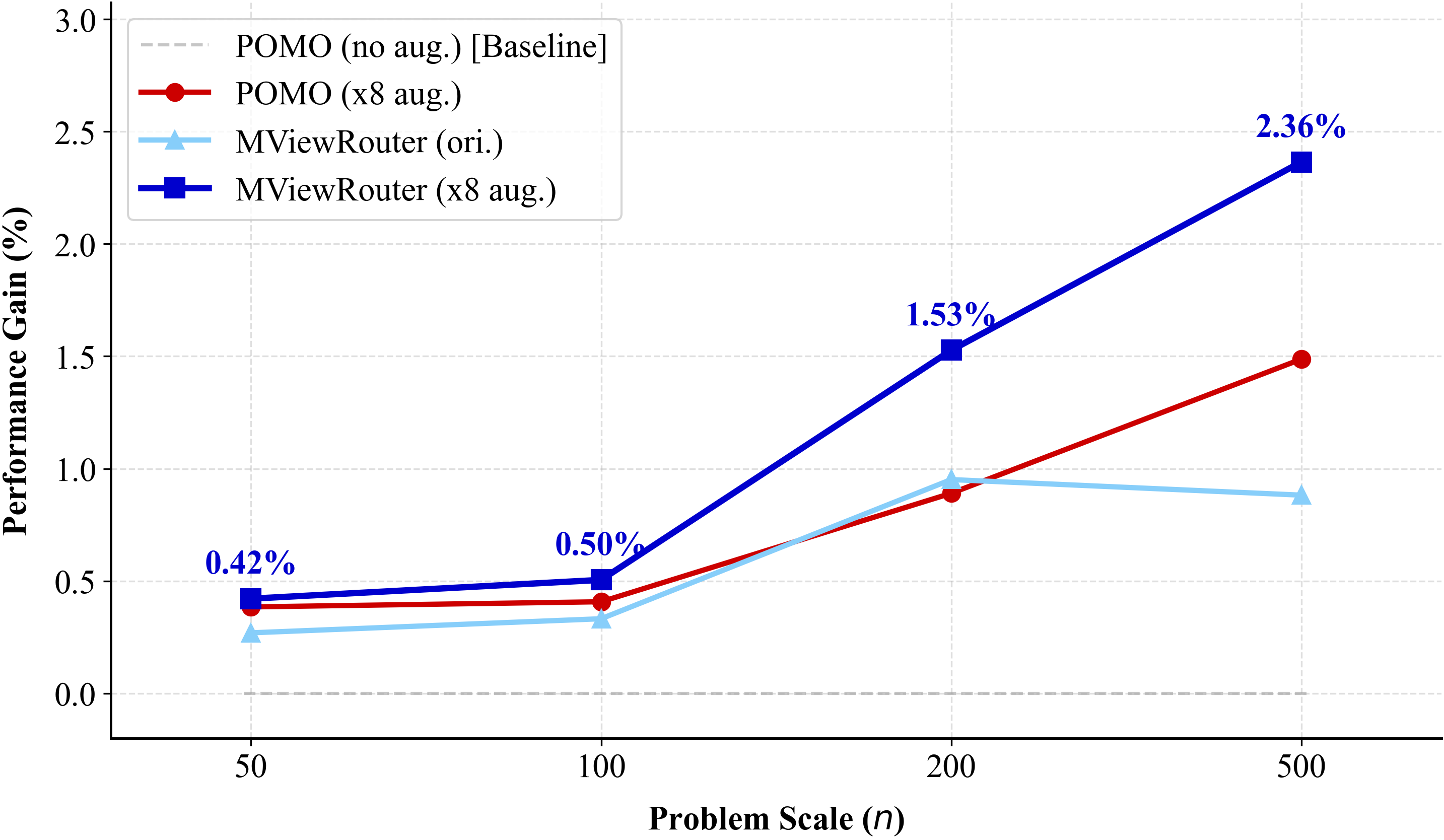}
    \vspace{-0.5em}
    \caption{Zero-shot generalization gap relative to POMO.}
    \label{fig:generalization}
    \vspace{-2\baselineskip}
\end{wrapfigure}

As shown in Table~\ref{tab:tsp_results}, MViewRouter achieves SOTA performance among RL-based neural solvers on both TSP-50 and TSP-100. In the original view, it achieves a 0.25\% gap on TSP-100, rivaling augmented POMO (0.17\%) while running $5.5\times$ faster. With full $\times$8 augmentation, the gap drops to 0.07\%, matching near-exact solver performance. 

To demonstrate generalization, as shown in Figure~\ref{fig:generalization}, this advantage grows with problem scale, outperforming POMO by 0.89\% at $n{=}500$ under $\times$8 augmentation, while the original-view model already surpasses augmented POMO at $n{=}200$.

\vspace{0.5em}
\paragraph{CVRP.}

\begin{table}[!t]
\centering
\renewcommand{\arraystretch}{1.1} 
\caption{Results on TSP-50 and TSP-100. Gap relative to Concorde. Type: Reinforcement Learning (RL) / Supervised Learning (SL) / Preference Optimization (PO) = training paradigm; Greedy (G) / Sampling (S) = decoding mode. Bold: best neural solver.}
\label{tab:tsp_results}
\resizebox{0.82\columnwidth}{!}{
\begin{tabular}{l|l l ccc ccc}
\toprule
& \multirow{2}{*}{\textbf{Method}} & \multirow{2}{*}{\textbf{Type}} & \multicolumn{3}{c}{\textbf{TSP-50}} & \multicolumn{3}{c}{\textbf{TSP-100}} \\
\cmidrule(lr){4-6} \cmidrule(lr){7-9}
& & & Len. & Gap & Time & Len. & Gap & Time \\
\midrule
\multirow{4}{*}{Traditional}
& Concorde  & Exact      & 5.69 & 0.00\% & (13m) & 7.76 & 0.00\% & (1h) \\
& Gurobi    & Exact      & 5.69 & 0.00\% & (2m)  & 7.76 & 0.00\% & (17m) \\
& LKH3      & Heuristic  & 5.69 & 0.00\% & (6m)  & 7.76 & 0.00\% & (25m) \\
& OR Tools  & Heuristic  & 5.85 & 2.87\% & (5m)  & 8.06 & 3.86\% & (23m) \\
\midrule
\multirow{9}{*}{Neural}
& POMO, no aug.              & RL+S  & 5.70 & 0.12\% & (1s)  & 7.80 & 0.58\% & (6s) \\
& POMO, $\times$8 aug.       & RL+S  & 5.69 & 0.03\% & (9s)  & 7.77 & 0.17\% & (44s) \\
& Sym-NCO                    & RL+S  & 5.70 & 0.16\% & (9s)  & 7.79 & 0.39\% & (44s) \\
& Pointerformer              & RL    & 5.69 & 0.02\% & (12s) & 7.77 & 0.15\% & (1m) \\
& DIFUSCO                    & SL+G  & 5.71 & 0.45\% & (9m)  & 7.85 & 1.21\% & (9m) \\
& DIFUSCO                    & SL+S  & 5.70 & 0.09\% & —     & 7.78 & 0.23\% & — \\
& POMO+PO                    & PO+S  & 5.69 & 0.02\% & (9s)  & 7.76 & 0.07\% & (44s) \\
\cmidrule{2-9}
& \cellcolor[gray]{0.9}\textbf{MViewRouter, ori.}     & \cellcolor[gray]{0.9}RL+S  & \cellcolor[gray]{0.9}5.70 & \cellcolor[gray]{0.9}0.08\% & \cellcolor[gray]{0.9}(2s)  & \cellcolor[gray]{0.9}7.78 & \cellcolor[gray]{0.9}0.25\% & \cellcolor[gray]{0.9}(8s) \\
& \cellcolor[gray]{0.9}\textbf{MViewRouter, $\times$8 aug.} & \cellcolor[gray]{0.9}RL+S & \cellcolor[gray]{0.9}\textbf{5.69} & \cellcolor[gray]{0.9}\textbf{0.01\%} & \cellcolor[gray]{0.9}(12s) & \cellcolor[gray]{0.9}\textbf{7.76} & \cellcolor[gray]{0.9}\textbf{0.07\%} & \cellcolor[gray]{0.9}(1m) \\
\bottomrule
\end{tabular}
}
\vspace{-0.5em}
\end{table}

\begin{table}[!t]
\centering
\renewcommand{\arraystretch}{1.1}
\caption{Results on CVRP-50 and CVRP-100. Gap relative to LKH3.}
\label{tab:cvrp_results}
\resizebox{0.82\columnwidth}{!}{
\begin{tabular}{l|l ccc ccc}
\toprule
& \multirow{2}{*}{\textbf{Method}} & \multicolumn{3}{c}{\textbf{CVRP-50}} & \multicolumn{3}{c}{\textbf{CVRP-100}} \\
\cmidrule(lr){3-5} \cmidrule(lr){6-8}
& & Len. & Gap & Time & Len. & Gap & Time \\
\midrule
Traditional & LKH3    & 10.38 & 0.00\% & (7h)  & 15.68 & 0.00\% & (12h) \\
\midrule
\multirow{6}{*}{Neural}
& POMO, no aug.           & 10.49 & 1.14\% & (1s)  & 15.83 & 0.98\% & (5s) \\
& POMO, $\times$8 aug.         & 10.46 & 0.76\% & (8s)  & 15.73 & 0.32\% & (40s) \\
& Sym-NCO      & 10.50 & 1.13\% & (8s)  & 15.92 & 1.51\% & (40s) \\
& POMO+PO      & 10.43 & 0.50\% & (8s)  & 15.81 & 0.82\% & (40s) \\
\cmidrule{2-8}
& \cellcolor[gray]{0.9}\textbf{MViewRouter, ori.}           & \cellcolor[gray]{0.9}10.45 & \cellcolor[gray]{0.9}0.67\% & \cellcolor[gray]{0.9}(2s)  & \cellcolor[gray]{0.9}15.81 & \cellcolor[gray]{0.9}0.82\% & \cellcolor[gray]{0.9}(7s)  \\
& \cellcolor[gray]{0.9}\textbf{MViewRouter, $\times$8 aug.} & \cellcolor[gray]{0.9}\textbf{10.40} & \cellcolor[gray]{0.9}\textbf{0.19\%} & \cellcolor[gray]{0.9}(12s) & \cellcolor[gray]{0.9}\textbf{15.71} & \cellcolor[gray]{0.9}\textbf{0.19\%} & \cellcolor[gray]{0.9}(55s) \\
\bottomrule
\end{tabular}
}
\vspace{-1em}
\end{table}

As shown in Table~\ref{tab:cvrp_results}, MViewRouter consistently outperforms all neural baselines on both problem sizes under both evaluation protocols, demonstrating that MAA and CPGA transfer naturally to CVRP without architectural modification. With $\times$8 augmentation, it achieves 0.19\% on both CVRP-50 and CVRP-100, outperforming POMO+PO by 0.31\% and 0.63\% respectively. In the original-view setting, it matches POMO+PO on CVRP-100 at 0.82\% while running $5.7\times$ faster, confirming the favorable accuracy--efficiency trade-off of internalized equivariance.

\subsection{Comparative Results on Real-world Instances}

\begin{table}[!t]
\centering
\renewcommand{\arraystretch}{1.1}
\begin{minipage}[t]{0.48\textwidth}
\centering
\caption{Mean results on 29 TSPLIB instances ($\times$8 aug.). Full per-instance results in Appendix~\ref{app:tsplib}.}
\label{tab:TSPLIB_summary}
\resizebox{0.89\textwidth}{!}{
\begin{tabular}{l ccc}
\toprule
\textbf{Method} & \textbf{Mean Len.} & \textbf{Mean Gap (\%)} & \textbf{Time (s)} \\
\midrule
POMO        & 6.683 & 2.373 & 0.043 \\
\rowcolor[gray]{0.9}
\textbf{MViewRouter} & \textbf{6.650} & \textbf{1.769} & 0.052 \\
\bottomrule
\end{tabular}
}
\end{minipage}
\hfill
\begin{minipage}[t]{0.48\textwidth}
\centering
\caption{Inference stability under $D_4$ perturbations on 10{,}000 TSP-100 instances. Gaps relative to best view; $\sigma$ = std.\ dev.\ of tour lengths.}
\label{tab:stability}
\resizebox{\textwidth}{!}{
\begin{tabular}{l ccc}
\toprule
\textbf{Method} & \textbf{Mean Gap (\%)} & \textbf{Max Gap (\%)} & \textbf{Std. Dev. ($\sigma$)} \\
\midrule
POMO & 0.41 & 1.00 & 0.028 \\
\rowcolor[gray]{0.9}
\textbf{MViewRouter} & \textbf{0.17} & \textbf{0.42} & \textbf{0.012} \\
\bottomrule
\end{tabular}
}
\end{minipage}
\vspace{-0.5em}
\end{table}

As shown in Table~\ref{tab:TSPLIB_summary}, MViewRouter achieves a 25.5\% relative reduction in the average optimality gap over POMO on 29 TSPLIB instances, with full per-instance results in Appendix~\ref{app:tsplib}.

\subsection{Geometric Invariance and Inference Stability}
\label{sec:stability}

\begin{wrapfigure}{r}{0.45\textwidth}
    \centering
    \includegraphics[width=1\linewidth]{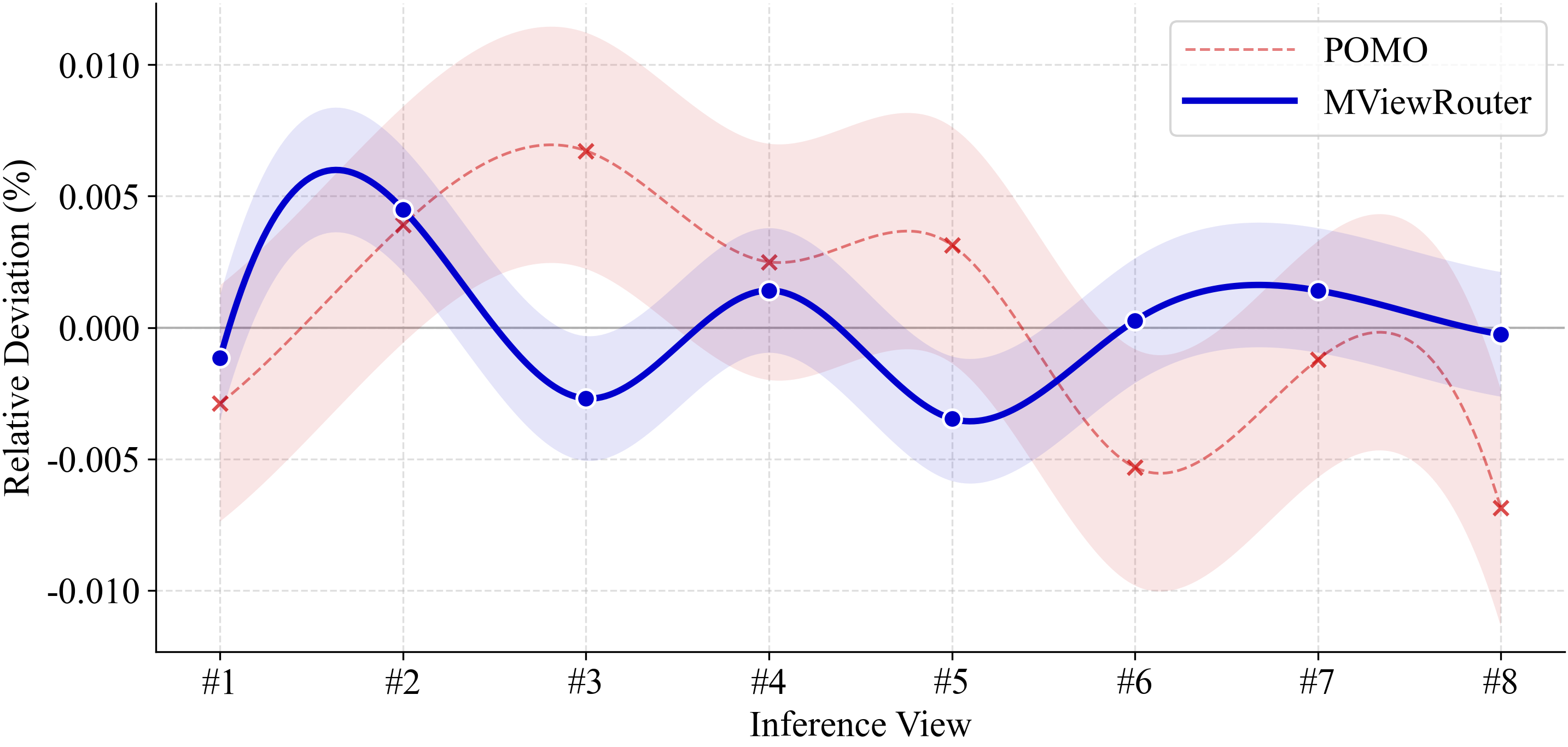}
    \caption{Per-view tour lengths across the $D_4$ symmetry group on TSP-100.}
    \label{fig:consistency_plot}
    \vspace{-\baselineskip}
\end{wrapfigure}

We apply all 8 $D_4$ transformations to 10,000 unseen TSP-100 instances and measure variance across
orientations as a proxy for internalized geometric invariance. As shown in Table~\ref{tab:stability} and Figure~\ref{fig:consistency_plot}, MViewRouter reduces $\sigma$ by 53.6\% over POMO.

While POMO's performance fluctuates with coordinate orientation, MViewRouter shows a nearly flat trajectory. The 53.6\% $\sigma$ reduction is the inference-time manifestation of the representation-level alignment that MAA achieves during training (Section~\ref{sec:ablation}), converging toward the $D_4$-invariant fixed point of Proposition~\ref{prop:maa_invariance}.

\subsection{Impact of Training View Quantity}
\label{sec:view_quantity}

We investigate the influence of the training view quantity $K \in \{1, 2, 4, 8\}$ on the convergence of MViewRouter, examining both solution quality and encoder-level geometric consistency.

\begin{figure*}[!h]
    \centering
    \includegraphics[width=1\textwidth]{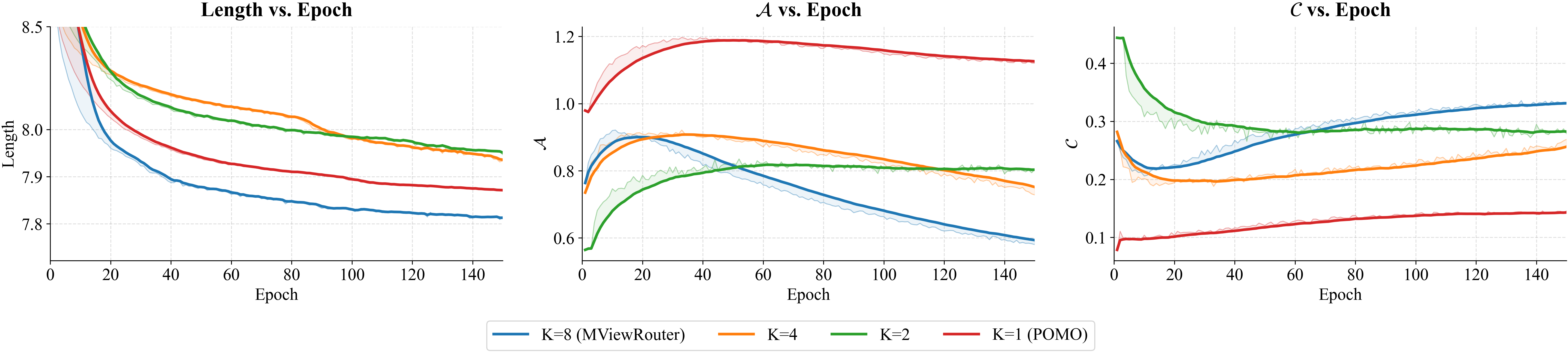}
    \vspace{-1em}
    \caption{Impact of view quantity $K \in \{1,2,4,8\}$ on TSP-100. Left: solution quality; center: alignment gap $\mathcal{A}$; right: cosine similarity $\mathcal{C}$.}
    \label{fig:view_analysis}
    \vspace{-1em}
\end{figure*}

\paragraph{Solution quality (Figure~\ref{fig:view_analysis}, left).}
The solution quality follows a non-monotonic pattern in $K$: $K=8$ achieves the best final gap via full $D_4$ orbit coverage, while $K=1$ outperforms intermediate $K=2,4$ at convergence since it incurs no alignment overhead. Early in training, $K=2$ temporarily leads $K=4$: the latter introduces $6\times$ more cross-view pairs without the stabilizing effect of full group closure, creating an alignment overhead that resolves once the encoder catches up. The $K=8$ configuration avoids this slow-start entirely, confirming that group completeness is the key structural property for rapid, stable convergence.
\paragraph{Representation alignment (Figure~\ref{fig:view_analysis}, center and right).}
\label{sec:repr-align}
To directly measure whether the encoder has learned geometrically consistent representations, we track two metrics on the final encoder layer. Let $\mathbf{H}^{(k)}_i$ denote the output embedding for node $i$ under view $k$. The cross-view alignment gap $\mathcal{A}$ measures the mean intra-node representation variance (lower is better); the cross-view cosine similarity $\mathcal{C}$~\cite{radford2021learning} measures their angular agreement (higher is better):
\begin{equation}
    \mathcal{A} = \frac{1}{n}\sum_{i=1}^{n}\frac{1}{K}\sum_{k=1}^{K}\left\|\mathbf{H}^{(k)}_i - \bar{\mathbf{H}}_i\right\|^2, \quad \bar{\mathbf{H}}_i = \frac{1}{K}\sum_{k=1}^{K}\mathbf{H}^{(k)}_i, \label{eq:align_gap}
\end{equation}
\begin{equation}
    \mathcal{C} = \frac{2}{nK(K-1)}\sum_{i=1}^{n}\sum_{j<k}\frac{\mathbf{H}^{(j)}_i \cdot \mathbf{H}^{(k)}_i}{\|\mathbf{H}^{(j)}_i\|\,\|\mathbf{H}^{(k)}_i\|}\quad (K > 1). \label{eq:cosine_sim}
\end{equation}
For POMO ($K=1$), both metrics are evaluated by encoding 8 augmentations independently through its encoder. Both metrics tell a consistent story. In early training, $K=4$ lags $K=2$ in both $\mathcal{A}$ and $\mathcal{C}$, confirming that the slower convergence is caused by unresolved alignment complexity rather than model capacity. Once this alignment difficulty is overcome, $K=4$ surpasses $K=2$ in both metrics. $K=8$ dominates monotonically throughout, as full $D_4$ closure provides a coherent gradient consensus from epoch one. Together, the curves establish a causal chain: complete group coverage, faster alignment, better optimization.


\section{Ablation Study}
\label{sec:ablation}

To evaluate the contribution of each architectural component in MViewRouter, we conduct a systematic ablation study on the TSP-100 benchmark. We analyze training convergence against two degraded variants: (1) \textbf{w/o CPGA}, which uses MAA but updates parameters with only the original-view gradient; and (2) \textbf{w/o MAA}, which processes eight views independently, skipping the global cross-view fiber attention.

\begin{figure*}[!h]
    \centering
    \includegraphics[width=1\textwidth]{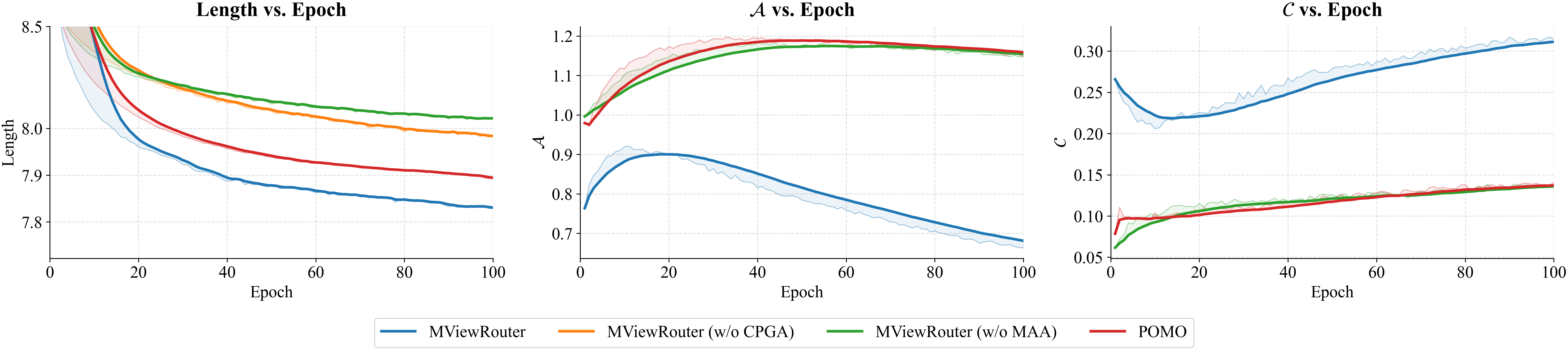}
    \vspace{-0.5em}
    \caption{Ablation on TSP-100. Left: solution quality; center: alignment gap $\mathcal{A}$; right: cosine similarity $\mathcal{C}$.}
    \label{fig:ablation}
\end{figure*}

\paragraph{Solution quality (Figure~\ref{fig:ablation}, left).}
CPGA mainly improves optimization efficiency: aggregating gradients over the $D_4$ orbit provides denser and more stable supervision, allowing the full model to exceed the ``w/o CPGA'' variant's final performance in less than half the training time. MAA, conversely, is critical for final optimality: the ``w/o MAA'' variant stagnates at a much higher optimality gap despite retaining full multi-view training, confirming that inter-view fiber attention (not merely multi-view gradient aggregation) is the essential mechanism for geometric grounding.

\paragraph{Representation alignment (Figure~\ref{fig:ablation}, center and right).}
POMO \cite{kwon2020pomo} and w/o MAA remain nearly indistinguishable in both $\mathcal{A}$ and $\mathcal{C}$ throughout training, confirming that gradient aggregation alone cannot drive the encoder toward $D_4$-invariant representations. The full MViewRouter achieves substantially lower $\mathcal{A}$ ($0.920 \to 0.664$) and higher $\mathcal{C}$ ($0.206 \to 0.314$), respectively, confirming that MAA's inter-view fiber attention is the essential mechanism for geometric invariance.





\section{Conclusion}
We introduced MViewRouter, a problem-agnostic framework that internalizes geometric equivariance for combinatorial routing. By integrating Multi-view Alternating Attention and Collective Policy Gradient Aggregation, our model transforms geometric symmetry from a post-hoc augmentation trick into a core structural inductive bias. Experiments on TSP, CVRP, and real-world TSPLIB benchmarks confirm that MViewRouter achieves superior solution quality, robust zero-shot generalization, and strong inference stability. Crucially, by rivaling augmented-baseline performance in the original view and generalizing naturally to CVRP without architectural modification, MViewRouter demonstrates the effectiveness of internalizing geometric equivariance as a universal structural prior for Euclidean routing problems.

\paragraph{Limitations.}
Scaling to instances with $1\text{k}$--$10\text{k}$ nodes remains an open challenge shared by all autoregressive constructive solvers \cite{kwon2020pomo,kim2022sym,jin2023pointerformer,pan2025preference}, and is orthogonal to our focus on geometric equivariance. Extending MViewRouter to further routing variants (e.g., VRPTW, orienteering) and combining it with fine-tuning \cite{pan2025preference} or active search \cite{hottung2021efficient,choo2022simulation} are natural directions for future work.

\bibliographystyle{plainnat}
\bibliography{example_paper}

@String{Computing = "Computing" }

@String{Computer = "{IEEE} Computer" }

@String{Chelsea = "Chelsea" }

@String{Springer = "Springer-Verlag" }

@inproceedings{queiroga202110,
  title={10,000 optimal CVRP solutions for testing machine learning based heuristics},
  author={Queiroga, Eduardo and Sadykov, Ruslan and Uchoa, Eduardo and Vidal, Thibaut},
  booktitle={AAAI-22 workshop on machine learning for operations research (ML4OR)},
  year={2021}
}

@inproceedings{pan2025preference,
  title={Preference Optimization for Combinatorial Optimization Problems},
  author={Pan, Mingjun and Lin, Guanquan and Luo, You-Wei and Zhu, Bin and Dai, Zhien and Sun, Lijun and Yuan, Chun},
  booktitle={Forty-second International Conference on Machine Learning},
  year={2025}
}

@article{liao2025bopo,
  title={BOPO: Neural combinatorial optimization via best-anchored and objective-guided preference optimization},
  author={Liao, Zijun and Chen, Jinbiao and Wang, Debing and Zhang, Zizhen and Wang, Jiahai},
  journal={arXiv preprint arXiv:2503.07580},
  year={2025}
}

@article{ma2024learning,
  title={Learning to search feasible and infeasible regions of routing problems with flexible neural k-opt},
  author={Ma, Yining and Cao, Zhiguang and Chee, Yeow Meng},
  journal={Advances in Neural Information Processing Systems},
  volume={36},
  pages={49555--49578},
  year={2023}
}

@article{hottung2021efficient,
  title={Efficient active search for combinatorial optimization problems},
  author={Hottung, Andr{\'e} and Kwon, Yeong-Dae and Tierney, Kevin},
  journal={arXiv preprint arXiv:2106.05126},
  year={2021}
}

@article{choo2022simulation,
  title={Simulation-guided beam search for neural combinatorial optimization},
  author={Choo, Jinho and Kwon, Yeong-Dae and Kim, Jihoon and Jae, Jeongwoo and Hottung, Andr{\'e} and Tierney, Kevin and Gwon, Youngjune},
  journal={Advances in Neural Information Processing Systems},
  volume={35},
  pages={8760--8772},
  year={2022}
}

@inproceedings{zheng2021combining,
  title={Combining reinforcement learning with Lin-Kernighan-Helsgaun algorithm for the traveling salesman problem},
  author={Zheng, Jiongzhi and He, Kun and Zhou, Jianrong and Jin, Yan and Li, Chu-Min},
  booktitle={Proceedings of the AAAI conference on artificial intelligence},
  volume={35},
  number={14},
  pages={12445--12452},
  year={2021}
}

@article{xin2021neurolkh,
  title={Neurolkh: Combining deep learning model with lin-kernighan-helsgaun heuristic for solving the traveling salesman problem},
  author={Xin, Liang and Song, Wen and Cao, Zhiguang and Zhang, Jie},
  journal={Advances in Neural Information Processing Systems},
  volume={34},
  pages={7472--7483},
  year={2021}
}

@article{hottung2020neural,
  title={Neural large neighborhood search for the capacitated vehicle routing problem},
  author={Hottung, Andr{\'e} and Tierney, Kevin},
  journal={arXiv preprint arXiv:1911.09539},
  year={2019}
}

@inproceedings{lu2019learning,
  title={A learning-based iterative method for solving vehicle routing problems},
  author={Lu, Hao and Zhang, Xingwen and Yang, Shuang},
  booktitle={International conference on learning representations},
  year={2019}
}

@article{grinsztajn2024winner,
  title={Winner takes it all: Training performant rl populations for combinatorial optimization},
  author={Grinsztajn, Nathan and Furelos-Blanco, Daniel and Surana, Shikha and Bonnet, Cl{\'e}ment and Barrett, Tom},
  journal={Advances in Neural Information Processing Systems},
  volume={36},
  pages={48485--48509},
  year={2023}
}

@article{kim2022sym,
  title={Sym-nco: Leveraging symmetricity for neural combinatorial optimization},
  author={Kim, Minsu and Park, Junyoung and Park, Jinkyoo},
  journal={Advances in Neural Information Processing Systems},
  volume={35},
  pages={1936--1949},
  year={2022}
}

@inproceedings{xin2021multi,
  title={Multi-decoder attention model with embedding glimpse for solving vehicle routing problems},
  author={Xin, Liang and Song, Wen and Cao, Zhiguang and Zhang, Jie},
  booktitle={Proceedings of the AAAI conference on artificial intelligence},
  volume={35},
  number={13},
  pages={12042--12049},
  year={2021}
}

@article{nazari2018reinforcement,
  title={Reinforcement learning for solving the vehicle routing problem},
  author={Nazari, Mohammadreza and Oroojlooy, Afshin and Snyder, Lawrence and Tak{\'a}c, Martin},
  journal={Advances in neural information processing systems},
  volume={31},
  year={2018}
}

@article{vinyals2015pointer,
  title={Pointer networks},
  author={Vinyals, Oriol and Fortunato, Meire and Jaitly, Navdeep},
  journal={Advances in neural information processing systems},
  volume={28},
  year={2015}
}

@article{baldacci2010some,
  title={Some applications of the generalized vehicle routing problem},
  author={Baldacci, Roberto and Bartolini, Enrico and Laporte, Gilbert},
  journal={Journal of the operational research society},
  volume={61},
  number={7},
  pages={1072--1077},
  year={2010},
  publisher={Taylor \& Francis}
}

@inproceedings{pan2023h,
  title={H-tsp: Hierarchically solving the large-scale traveling salesman problem},
  author={Pan, Xuanhao and Jin, Yan and Ding, Yuandong and Feng, Mingxiao and Zhao, Li and Song, Lei and Bian, Jiang},
  booktitle={Proceedings of the AAAI Conference on Artificial Intelligence},
  volume={37},
  number={8},
  pages={9345--9353},
  year={2023}
}

@article{kool2018attention,
  title={Attention, learn to solve routing problems!},
  author={Kool, Wouter and Van Hoof, Herke and Welling, Max},
  journal={arXiv preprint arXiv:1803.08475},
  year={2018}
}

@article{kwon2020pomo,
  title={Pomo: Policy optimization with multiple optima for reinforcement learning},
  author={Kwon, Yeong-Dae and Choo, Jinho and Kim, Byoungjip and Yoon, Iljoo and Gwon, Youngjune and Min, Seungjai},
  journal={Advances in Neural Information Processing Systems},
  volume={33},
  pages={21188--21198},
  year={2020}
}

@inproceedings{jin2023pointerformer,
  title={Pointerformer: Deep reinforced multi-pointer transformer for the traveling salesman problem},
  author={Jin, Yan and Ding, Yuandong and Pan, Xuanhao and He, Kun and Zhao, Li and Qin, Tao and Song, Lei and Bian, Jiang},
  booktitle={Proceedings of the AAAI Conference on Artificial Intelligence},
  volume={37},
  number={7},
  pages={8132--8140},
  year={2023}
}

@misc{applegate2006concorde,
  title={Concorde TSP solver},
  author={Applegate, David and Bixby, Ribert and Chvatal, Vasek and Cook, William},
  year={2006}
}

@article{helsgaun2017extension,
  title={An extension of the Lin-Kernighan-Helsgaun TSP solver for constrained traveling salesman and vehicle routing problems},
  author={Helsgaun, Keld},
  journal={Roskilde: Roskilde University},
  volume={12},
  pages={966--980},
  year={2017}
}

@article{bengio2021machine,
  title={Machine learning for combinatorial optimization: a methodological tour d’horizon},
  author={Bengio, Yoshua and Lodi, Andrea and Prouvost, Antoine},
  journal={European Journal of Operational Research},
  volume={290},
  number={2},
  pages={405--421},
  year={2021},
  publisher={Elsevier}
}

@article{vaswani2017attention,
  title={Attention is all you need},
  author={Vaswani, Ashish and Shazeer, Noam and Parmar, Niki and Uszkoreit, Jakob and Jones, Llion and Gomez, Aidan N and Kaiser, {\L}ukasz and Polosukhin, Illia},
  journal={Advances in neural information processing systems},
  volume={30},
  year={2017}
}

@article{bello2016neural,
  title={Neural combinatorial optimization with reinforcement learning},
  author={Bello, Irwan and Pham, Hieu and Le, Quoc V and Norouzi, Mohammad and Bengio, Samy},
  journal={arXiv preprint arXiv:1611.09940},
  year={2016}
}

@article{sun2023difusco,
  title={Difusco: Graph-based diffusion solvers for combinatorial optimization},
  author={Sun, Zhiqing and Yang, Yiming},
  journal={Advances in neural information processing systems},
  volume={36},
  pages={3706--3731},
  year={2023}
}

@article{wu2021learning,
  title={Learning improvement heuristics for solving routing problems},
  author={Wu, Yaoxin and Song, Wen and Cao, Zhiguang and Zhang, Jie and Lim, Andrew},
  journal={IEEE transactions on neural networks and learning systems},
  volume={33},
  number={9},
  pages={5057--5069},
  year={2021},
  publisher={IEEE}
}

@article{reinelt1991tsplib,
  title={TSPLIB—A traveling salesman problem library},
  author={Reinelt, Gerhard},
  journal={ORSA journal on computing},
  volume={3},
  number={4},
  pages={376--384},
  year={1991},
  publisher={INFORMS}
}

@inproceedings{fang2024invit,
  title={INViT: A Generalizable Routing Problem Solver with Invariant Nested View Transformer},
  author={Fang, Han and Song, Zhihao and Weng, Paul and Ban, Yutong},
  booktitle={International Conference on Machine Learning},
  pages={12973--12992},
  year={2024},
  organization={PMLR}
}

@inproceedings{wang2025vggt,
  title={Vggt: Visual geometry grounded transformer},
  author={Wang, Jianyuan and Chen, Minghao and Karaev, Nikita and Vedaldi, Andrea and Rupprecht, Christian and Novotny, David},
  booktitle={Proceedings of the Computer Vision and Pattern Recognition Conference},
  pages={5294--5306},
  year={2025}
}

@article{pedroso2011optimization,
  title={Optimization with gurobi and python},
  author={Pedroso, Joo Pedro},
  journal={INESC Porto and Universidade do Porto,, Porto, Portugal},
  volume={1},
  year={2011}
}

@inproceedings{satorras2021n,
  title={E (n) equivariant graph neural networks},
  author={Satorras, V{\i}ctor Garcia and Hoogeboom, Emiel and Welling, Max},
  booktitle={International conference on machine learning},
  pages={9323--9332},
  year={2021},
  organization={PMLR}
}

@inproceedings{arnab2021vivit,
  title={Vivit: A video vision transformer},
  author={Arnab, Anurag and Dehghani, Mostafa and Heigold, Georg and Sun, Chen and Lu{\v{c}}i{\'c}, Mario and Schmid, Cordelia},
  booktitle={Proceedings of the IEEE/CVF international conference on computer vision},
  pages={6836--6846},
  year={2021}
}

@article{bronstein2021geometric,
  title={Geometric deep learning: Grids, groups, graphs, geodesics, and gauges},
  author={Bronstein, Michael M and Bruna, Joan and Cohen, Taco and Veli{\v{c}}kovi{\'c}, Petar},
  journal={arXiv preprint arXiv:2104.13478},
  year={2021}
}

@article{yu2020gradient,
  title={Gradient surgery for multi-task learning},
  author={Yu, Tianhe and Kumar, Saurabh and Gupta, Abhishek and Levine, Sergey and Hausman, Karol and Finn, Chelsea},
  journal={Advances in neural information processing systems},
  volume={33},
  pages={5824--5836},
  year={2020}
}

@article{sener2018multi,
  title={Multi-task learning as multi-objective optimization},
  author={Sener, Ozan and Koltun, Vladlen},
  journal={Advances in neural information processing systems},
  volume={31},
  year={2018}
}

@article{williams1992simple,
  title={Simple statistical gradient-following algorithms for connectionist reinforcement learning},
  author={Williams, Ronald J},
  journal={Machine learning},
  volume={8},
  number={3},
  pages={229--256},
  year={1992},
  publisher={Springer}
}

@article{kingma2014adam,
  title={Adam: A method for stochastic optimization},
  author={Kingma, Diederik P and Ba, Jimmy},
  journal={arXiv preprint arXiv:1412.6980},
  year={2014}
}

@inproceedings{radford2021learning,
  title={Learning transferable visual models from natural language supervision},
  author={Radford, Alec and Kim, Jong Wook and Hallacy, Chris and Ramesh, Aditya and Goh, Gabriel and Agarwal, Sandhini and Sastry, Girish and Askell, Amanda and Mishkin, Pamela and Clark, Jack and others},
  booktitle={International conference on machine learning},
  pages={8748--8763},
  year={2021},
  organization={PmLR}
}

@article{drakulic2023bq,
  title={Bq-nco: Bisimulation quotienting for efficient neural combinatorial optimization},
  author={Drakulic, Darko and Michel, Sofia and Mai, Florian and Sors, Arnaud and Andreoli, Jean-Marc},
  journal={Advances in Neural Information Processing Systems},
  volume={36},
  pages={77416--77429},
  year={2023}
}

\newpage
\appendix

\section{$D_4$ Symmetry Group Transformations}
\label{app:d4}
The $D_4$ dihedral group consists of 8 isometric transformations of the plane, applied pointwise to node coordinates $(x, y) \in [0,1]^2$: four rotations ($0^\circ$, $90^\circ$, $180^\circ$, $270^\circ$ clockwise) and four reflections (horizontal, vertical, and the two diagonals). Concretely, the 8 transformations $\{g_k\}_{k=1}^{8}$ are listed in Table~\ref{tab:d4}. All transformations preserve pairwise Euclidean distances, ensuring that the optimal tour length is invariant across all 8 views.

\begin{table}[h]
\centering
\caption{The 8 elements of the $D_4$ dihedral group applied pointwise to node coordinates $(x, y) \in [0,1]^2$.}
\label{tab:d4}
\begin{tabular}{cll}
\toprule
$k$ & Transformation & $(x, y) \mapsto$ \\
\midrule
1 & Identity & $(x, y)$ \\
2 & $90^\circ$ rotation & $(y, 1-x)$ \\
3 & $180^\circ$ rotation & $(1-x, 1-y)$ \\
4 & $270^\circ$ rotation & $(1-y, x)$ \\
5 & Horizontal flip & $(1-x, y)$ \\
6 & Vertical flip & $(x, 1-y)$ \\
7 & Diagonal flip & $(y, x)$ \\
8 & Anti-diagonal flip & $(1-y, 1-x)$ \\
\bottomrule
\end{tabular}
\end{table}

\section{Complexity Analysis}
\label{app:complexity}

\paragraph{MAA encoder.}
Each encoder block performs two attention passes. The local pass applies $\mathrm{MHA}_{\mathrm{local}}$ to each of the $K$ views independently over $n$ nodes, costing $\mathcal{O}(K n^2 d)$ per layer. The global pass applies $\mathrm{MHA}_{\mathrm{global}}$ to each of the $n$ node fibers over $K$ views, costing $\mathcal{O}(n K^2 d)$ per layer. The total per-layer complexity is therefore
\begin{equation}
    \mathcal{O}(Kn^2d + nK^2d) = \mathcal{O}(Knd(n + K)).
\end{equation}
Since $K = 8$ is a small fixed constant, this simplifies to $\mathcal{O}(n^2 d)$, identical asymptotic complexity to the standard single-view Transformer \cite{vaswani2017attention} in POMO \cite{kwon2020pomo}. Over $L$ layers, the encoder runs in $\mathcal{O}(L n^2 d)$.

\paragraph{CPGA training.}
All $K$ views are packed into a single unified batch of size $K \times B$ (where $B$ is the view-specific batch size), so no additional forward passes are required beyond what the batch accommodates in parallel. The view-specific baseline and gradient aggregation add $\mathcal{O}(KN)$ scalar operations per step, which is negligible. Empirically, the total training overhead is 6\% over POMO (5.3 vs.\ 5.0 min per epoch), confirming that the theoretical $\mathcal{O}(n^2 d)$ parity translates to near-identical wall-clock cost in practice.

\section{Training Algorithm}
\label{app:algorithm}

Each training iteration processes a mini-batch of instances $\mathbf{X}$ by generating all $K{=}8$ symmetric views in parallel. For each view $g_k(\mathbf{X})$, the policy samples $N$ trajectories and computes a view-specific baseline $b^{(k)}$ as their mean reward. The view-specific baseline is the key design choice that prevents geometrically easier orientations from dominating the gradient signal: by normalizing rewards within each view independently, CPGA ensures that every group element contributes equally to the parameter update regardless of its intrinsic difficulty. The $K$ view-specific gradients are then averaged into a single consensus update, which backpropagates orbit-consistent supervision through both the MAA encoder and the decoder in one step. This is strictly more efficient than running $K$ sequential updates, and produces a lower-variance gradient estimate than treating the views as independent augmented samples with a shared baseline.

\begin{algorithm}[h]
    \caption{Training of MViewRouter with CPGA}
    \label{alg:omni_training}
    \begin{algorithmic}[1]
        \STATE {\bfseries Input:} Dataset $\mathcal{D}$, Symmetry Group $\mathcal{G}$, Learning rate $\eta$
        \STATE {\bfseries Initialize:} Policy parameters $\theta$
        \FOR{each training iteration}
            \STATE Sample mini-batch of instances $\mathbf{X} \sim \mathcal{D}$
            \FOR{each group element $g_k \in \mathcal{G}$ \textbf{in parallel}}
                \STATE $\mathbf{X}^{(k)} \leftarrow g_k(\mathbf{X})$
                \STATE Sample $N$ trajectories: $\{\pi^{(k,i)}\}_{i=1}^N \sim p_\theta(\cdot | \mathbf{X}^{(k)})$
                \STATE Compute view-specific baseline: $b^{(k)} \leftarrow \frac{1}{N} \sum_{i=1}^N R(\pi^{(k,i)})$
            \ENDFOR
            \STATE Collect all trajectories $\{\pi^{(k,i)}\}_{k=1}^{K}\,{}_{i=1}^{N}$ and view-specific baselines $\{b^{(k)}\}_{k=1}^{K}$
            \STATE Compute joint gradient $\nabla_\theta \mathcal{J}_{\mathcal{G}}(\theta)$ via Eq.~\eqref{eq:collective_gradient}
            \STATE Update parameters: $\theta \leftarrow \theta + \eta \nabla_\theta \mathcal{J}_{\mathcal{G}}(\theta)$
        \ENDFOR
        \STATE {\bfseries Output:} Optimized invariant policy $\theta^*$
    \end{algorithmic}
\end{algorithm}

\section{Implementation Details}
\label{app:impl}

We adopt an encoder-decoder architecture \cite{kool2018attention} with $L=6$ layers, embedding dimension $d=256$, feed-forward dimension $d_{\text{ff}}=512$, and $M=8$ attention heads \cite{vaswani2017attention}. The decoder follows the context-based attention mechanism of POMO \cite{kwon2020pomo}. All training is performed on a single NVIDIA RTX 4090 GPU using the Adam optimizer \cite{kingma2014adam} with learning rate $\eta=10^{-4}$ and weight decay $10^{-6}$. The model is trained for 2,000 epochs, each covering 100,000 randomly generated instances. MViewRouter takes 5.3 min per epoch on TSP (vs.\ 5.0 min for POMO and 9.0 min for POMO+PO \cite{pan2025preference}).

\section{CVRP-100 Training Dynamics}
\label{app:training_curves}

Figure~\ref{fig:training_curves_cvrp} shows the training curves on CVRP-100 under the same two evaluation protocols as Section~\ref{sec:training_dynamics}: original view and $\times$8 augmentation. MViewRouter converges faster and to a lower tour length than POMO \cite{kwon2020pomo} under both protocols, mirroring the TSP-100 behavior. In the original-view setting, MViewRouter reaches the performance level of augmented POMO well before training converges, demonstrating that the geometric invariance learned by MAA and CPGA transfers directly to a structurally different problem without any architectural modification. The consistent advantage across both TSP and CVRP confirms that the convergence benefit stems from the MAA and CPGA mechanisms rather than TSP-specific properties.

\begin{figure*}[h]
    \centering
    \includegraphics[width=0.95\textwidth]{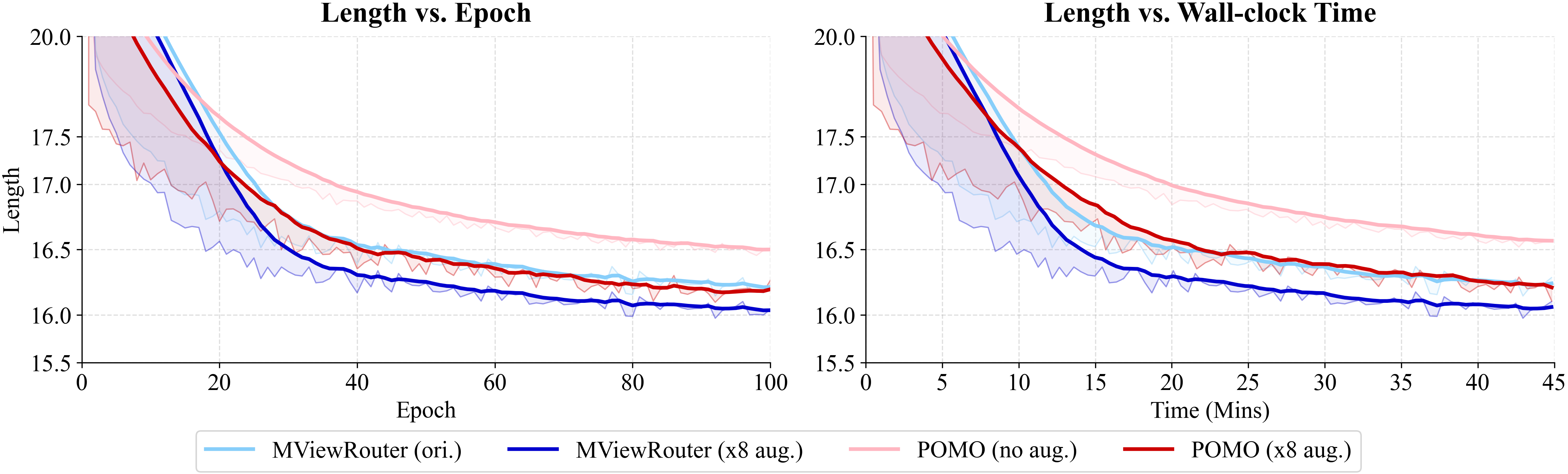}
    \caption{Training curves on CVRP-100: POMO and MViewRouter under original view and $\times$8 aug.}
    \label{fig:training_curves_cvrp}
\end{figure*}

\section{Full TSPLIB Benchmark Results}
\label{app:tsplib}

Table~\ref{tab:TSPLIB_full} reports per-instance results on all 29 TSPLIB instances (50--200 nodes) \cite{reinelt1991tsplib}. Both POMO \cite{kwon2020pomo} and MViewRouter use $\times$8 augmentation with a model trained on TSP-100, evaluated zero-shot without retraining or fine-tuning.

\paragraph{Overall and per-instance analysis.}
MViewRouter improves over POMO on 22 of 29 instances, with a 25.5\% relative reduction in mean optimality gap (2.373\% $\to$ 1.769\%). The gains are most pronounced on instances where POMO's gap is large, suggesting that MViewRouter's internalized geometric equivariance is especially beneficial when the coordinate orientation of a real-world instance happens to be unfavorable for the base policy. Notable examples include \texttt{bier127} (6.66\% $\to$ 0.77\%), \texttt{rat99} (3.93\% $\to$ 1.81\%), and \texttt{krA100} (0.96\% $\to$ 0.016\%). The \texttt{kr*} family (Krolak instances with clustered structure) shows consistent improvement across all sizes from 100 to 200 nodes.

\paragraph{Scaling behavior.}
Performance on instances near $n{=}200$ reflects the well-known distribution shift challenge: the model is trained on $n{=}100$, and both methods see their gaps widen at double the scale. Nevertheless, MViewRouter maintains an advantage on the majority of larger instances (\texttt{d198}: 23.49\% $\to$ 18.23\%; \texttt{krB200}: 3.93\% $\to$ 3.49\%), consistent with the zero-shot generalization trend in Figure~\ref{fig:generalization} showing that MViewRouter's geometric inductive bias continues to help even under significant scale shift.

\begin{table}[h]
\centering
\renewcommand{\arraystretch}{1.0}
\caption{Per-instance results on real-world TSPLIB instances (50--200 nodes). Both POMO and MViewRouter utilize $\times$8 augmentations.}
\label{tab:TSPLIB_full}
\fontsize{8pt}{11pt}\selectfont
\setlength{\tabcolsep}{1.7pt}
\begin{tabular}{l c c ccc ccc | l c c ccc ccc}
\toprule
\multirow{2}{*}{Inst.} & \multirow{2}{*}{Size} & \multirow{2}{*}{Opt.} & \multicolumn{3}{c}{POMO} & \multicolumn{3}{c}{\textbf{MViewRouter}} & \multirow{2}{*}{Inst.} & \multirow{2}{*}{Size} & \multirow{2}{*}{Opt.} & \multicolumn{3}{c}{POMO} & \multicolumn{3}{c}{\textbf{MViewRouter}} \\
\cmidrule(lr){4-6}\cmidrule(lr){7-9} \cmidrule(lr){13-15}\cmidrule(lr){16-18}
 & & & Len. & Gap & T. (s) & Len. & Gap & T. (s) & & & & Len. & Gap & T. (s) & Len. & Gap & T. (s) \\
\midrule
eil51 & 51 & 6.762 & 6.818 & 0.827 & 0.224 & \textbf{6.809} & \textbf{0.698} & 0.265 & pr124 & 124 & 6.263 & 6.269 & 0.101 & 0.038 & \textbf{6.263} & \textbf{0.003} & 0.043 \\
berlin52 & 52 & 4.398 & 4.399 & 0.028 & 0.017 & \textbf{4.399} & \textbf{0.024} & 0.026 & bier127 & 127 & 6.937 & 7.399 & 6.660 & 0.034 & \textbf{6.991} & \textbf{0.774} & 0.046 \\
st70 & 70 & 6.818 & 6.839 & 0.315 & 0.020 & 6.840 & 0.328 & 0.028 & ch130 & 130 & 8.831 & 8.868 & 0.417 & 0.040 & \textbf{8.835} & \textbf{0.040} & 0.050 \\
eil76 & 76 & 7.472 & 7.561 & 1.187 & 0.022 & \textbf{7.561} & \textbf{1.187} & 0.028 & pr136 & 136 & 8.545 & 8.631 & 1.006 & 0.045 & \textbf{8.602} & \textbf{0.666} & 0.047 \\
pr76 & 76 & 5.518 & 5.520 & 0.028 & 0.022 & 5.521 & 0.046 & 0.028 & pr144 & 144 & 5.358 & 5.407 & 0.919 & 0.056 & \textbf{5.370} & \textbf{0.232} & 0.049 \\
rat99 & 99 & 5.685 & 5.909 & 3.931 & 0.027 & \textbf{5.788} & \textbf{1.809} & 0.036 & ch150 & 150 & 9.337 & 9.398 & 0.651 & 0.058 & \textbf{9.383} & \textbf{0.492} & 0.052 \\
krA100 & 100 & 5.407 & 5.459 & 0.960 & 0.027 & \textbf{5.408} & \textbf{0.016} & 0.035 & krA150 & 150 & 6.710 & 6.803 & 1.380 & 0.058 & \textbf{6.771} & \textbf{0.908} & 0.054 \\
krB100 & 100 & 5.627 & 5.674 & 0.840 & 0.031 & \textbf{5.643} & \textbf{0.287} & 0.035 & krB150 & 150 & 6.657 & 6.765 & 1.616 & 0.044 & \textbf{6.764} & \textbf{1.607} & 0.053 \\
krC100 & 100 & 5.286 & 5.292 & 0.115 & 0.028 & \textbf{5.287} & \textbf{0.015} & 0.035 & pr152 & 152 & 5.368 & 5.450 & 1.536 & 0.039 & \textbf{5.419} & \textbf{0.953} & 0.052 \\
krD100 & 100 & 5.461 & 5.504 & 0.793 & 0.028 & \textbf{5.479} & \textbf{0.323} & 0.035 & u159 & 159 & 8.092 & 8.186 & 1.165 & 0.041 & \textbf{8.173} & \textbf{1.002} & 0.055 \\
krE100 & 100 & 5.624 & 5.704 & 1.426 & 0.028 & \textbf{5.662} & \textbf{0.679} & 0.035 & rat195 & 195 & 8.038 & 8.746 & 8.814 & 0.050 & 8.815 & 9.663 & 0.066 \\
rd100 & 100 & 8.065 & 8.073 & 0.103 & 0.028 & \textbf{8.065} & \textbf{0.003} & 0.035 & d198 & 198 & 3.917 & 4.837 & 23.487 & 0.049 & \textbf{4.631} & \textbf{18.226} & 0.069 \\
eil101 & 101 & 8.500 & 8.666 & 1.949 & 0.033 & \textbf{8.660} & \textbf{1.877} & 0.038 & krA200 & 200 & 7.452 & 7.667 & 2.879 & 0.059 & 7.742 & 3.885 & 0.068 \\
lin105 & 105 & 4.755 & 4.789 & 0.709 & 0.029 & 4.802 & 0.996 & 0.037 & krB200 & 200 & 7.466 & 7.760 & 3.932 & 0.049 & \textbf{7.726} & \textbf{3.487} & 0.068 \\
pr107  & 107 & 5.370 & 5.427 & 1.055 & 0.030 & 5.428 & 1.083 & 0.038 & \cellcolor[gray]{0.9} Mean & \cellcolor[gray]{0.9}- & \cellcolor[gray]{0.9}- & \cellcolor[gray]{0.9}6.683 & \cellcolor[gray]{0.9}2.373 & \cellcolor[gray]{0.9}0.043 & \cellcolor[gray]{0.9}\textbf{6.650} & \cellcolor[gray]{0.9}\textbf{1.769} & \cellcolor[gray]{0.9}0.052 \\
\bottomrule
\end{tabular}
\end{table}


\end{document}